\newcommand{\spr}{s^{\prime}}
\newcommand{\argmax}{\operatornamewithlimits{argmax}}
\newtheorem{theorem}{Theorem}[section]
\newtheorem{lemma}[theorem]{Lemma}
\begin{document}

\title{Strategy Masking: A Method for Guardrails in Value-based Reinforcement Learning Agents}

\author{\IEEEauthorblockN{1\textsuperscript{st} Jonathan Keane}
\IEEEauthorblockA{\textit{Department of Computer Science and Software Engineering} \\
\textit{Milwaukee School of Engineering}\\
Milwaukee, WI USA \\
keanej@msoe.edu}
\and
\IEEEauthorblockN{2\textsuperscript{nd} Sam Keyser}
\IEEEauthorblockA{\textit{Department of Computer Science and Software Engineering} \\
\textit{Milwaukee School of Engineering}\\
Milwaukee, WI USA \\
keysers@msoe.edu}
\and
\IEEEauthorblockN{3\textsuperscript{rd} Jeremy Kedziora}
\IEEEauthorblockA{\textit{Department of Computer Science and Software Engineering} \\
\textit{Milwaukee School of Engineering}\\
Milwaukee, WI USA \\
kedziora@msoe.edu}
}

\maketitle

\begin{abstract}
\noindent The use of reward functions to structure AI learning and decision making is core to the current reinforcement learning paradigm; however, without careful design of reward functions, agents can learn to solve problems in ways that may be considered ``undesirable" or ``unethical.  Without thorough understanding of the incentives a reward function creates, it can be difficult to impose principled yet general control mechanisms over its behavior.  In this paper, we study methods for constructing guardrails for AI agents that use reward functions to learn decision making.  We introduce a novel approach, which we call strategy masking, to explicitly learn and then suppress undesirable AI agent behavior.  We apply our method to study lying in AI agents and show that it can be used to effectively modify agent behavior by suppressing lying post-training without compromising agent ability to perform effectively.

\end{abstract}

\begin{IEEEkeywords}
 Safety in Machine Learning, Reinforcement Learning, Explainability, Reward Decomposition, DQN
\end{IEEEkeywords}

\section{Introduction}
\noindent The use of reward functions to structure AI learning and decision making is core to the current reinforcement learning paradigm (\cite{suttonbarto2018}) and widely applied to fine-tune large language models.  Without careful design of reward functions, agents can learn to solve problems in ways that may be considered ``undesirable" or ``unethical."  Without a thorough understanding of the incentives a reward function creates, it can be difficult to impose principled yet general control mechanisms over its behavior.\footnote{For instance, if an AI agent takes an action inconsistent with correct information this may be human-interpreted as a hallucination.  However, it may not be clear whether the agent made an `honest mistake' or learned to act 'dishonestly' and that dishonest behavior is in fact optimal from the perspective of the agent.  It also may not be clear how to intervene to prevent it from happening again.}

In this paper, we study methods for constructing guardrails for AI agents that use reward functions to learn decision making.  We introduce a novel approach, which we call strategy masking, to explicitly learn and then suppress undesirable AI agent behavior.  Our key insight is that principled control over an AI agent must come from understanding and adjusting the incentives that lead it to make decisions.  Accordingly, we leverage techniques from the literature on expedited learning and explainability to decompose an agent's reward function (and thereby its long-term state-action values) into multiple dimensions during training.  Specific dimensions encode the expected value of employing specific agent behaviors. We then mask these dimensions to provide the AI agent access to only a subset of them for use in decision making during inference, enabling a user to adjust the learned incentives of the agent and thereby guide it to desirable behavior. We refer to this approach as strategy masking; it can be used with any reward function and value-based model architecture during or after training.

Decomposition of rewards has previously been used in hierarchical reinforcement learning (\cite{karlsson1994, Dietterich2000,shu2018,tham1994,Seijen2017}), as a means for expediting learning (\cite{russell2003, icarte2018}), or for explainability of agent decisions (\cite{juozapaitis2019}).  Perhaps closest in form to our work is \cite{badia2020}, which is aimed at optimizing agent exploration by learning Q-values based on balancing additive intrinsic and extrinsic rewards.  Our work differs from this in that we seek to use reward decomposition and masking as a means for shaping agent behavior by altering reward dimensions associated with specific action-level characteristics. While prior works have explored learning action masks on a per-state basis (\cite{wang2024}) to adjust agent behavior, we are not aware any such approach employed on reward action-level characteristics.

We apply our method to study lying in AI agents. We focus on lying because it is informationally rich and it is difficult to detect after the fact by its very nature.  Beyond that, information integrity (e.g. large language model hallucination and generation of deepfake content) is a key AI safety concern. As such, the risk of learned dishonest behavior is also a significant barrier to AI adoption in industries where consistency between verified information and AI generated content are of value.  Standard benchmark reinforcement learning environments (\cite{brockman2016}) are not well-suited for studying lying, so we use the popular social deception game Coup as a testing environment for our method.  In Coup, every player begins with private information not available to opponents and can take actions that are explicitly intended to misrepresent that information, i.e. to function as lies.  Limited work has been done applying artificial intelligence in the game of Coup, but Coup shares several similarities with poker which has been studied in greater depth (\cite{brown2017,brown2019,shi2022}).\footnote{Coup has several advantages over poker for our purposes: it lacks betting and so is a simpler environment; lying is unambiguous in Coup (but not in poker) and so it provides well-defined actions for targeted incentives of information management; it is also a comparatively short game with a high density of information relative to game length.}

For our experiments, we adapt the DQN family of value-based algorithms with function approximation (\cite{mnih2013}), apply reward decomposition and strategy masking to it, and use the resulting algorithm in our multi-agent Coup setting to train agents.  We establish that AI agents trained in the Coup environment can learn to lie to various degrees depending on the reward function the agent trains with. We show that strategy masking can effectively modify agent behavior by suppressing, or actively penalizing, the reward dimension for lying such that agents act more honestly while not compromising their ability to perform effectively (i.e. by winning games).

\section{Reward Decomposition \& Strategy Masking for TD(0) Algorithms}
\noindent In reinforcement learning, control problems are modeled as a Markov Decision Process (MDP) which is defined by a set of states $S$ that describe the current environmental conditions facing the agent, a set of actions $A$ that an agent can take, the probabilities $p(s'\mid a,s$) for transitioning from state $s$ to state $s'$ given action $a$, and a function $r:S\times A\times S \to \mathbb{R}$ so that $r(s^{\prime},a,s)$ supplies the immediate reward associated with this transition. In environments that take place across a finite number of discrete periods $T$, the sequence of periods the agent participates in is referred to as an episode. The goal of the agent is to learn a policy $\pi(a|s)$, which describes the probability that a trained agent should take action $a$ in state $s$, to maximize the sequence of rewards across across an episode: $\sum_{t = 0}^T\gamma^tr(s_{t+1},a_t,s_t)$.  Here $a_t$ and $s_t$ are the action and state at time $t$ and $\gamma\in[0,1]$ is the discount factor on future rewards.

In value-based reinforcement learning, the agent learns to predict the long term expected value for state-action pairs iteratively by repeatedly estimating the $Q$-values, defined recursively for a given policy $\pi$ as:
\begin{align*}
Q_{\pi}&(s,a)\\
&= \sum_{s^{\prime}\in S}p(s^{\prime}|a,s)\left[r(s^{\prime},a,s) + \gamma\sum_{a\in A}\pi(a|s^{\prime})Q_{\pi}(s^{\prime},a)\right].
 \end{align*}
The goal is then to update the policy to put high probability on actions with large $Q$-values, for example by using an $\varepsilon$-greedy approach where:
\begin{align*}
    \pi(a|s) = \left\{\begin{array}{ll}
    1-\varepsilon + \frac{\varepsilon}{|A|}&\mbox{if }a\in\argmax_{a\in A}\{Q_{\pi}(s,a)\}\\
    \frac{\varepsilon}{|A|}&\mbox{otherwise.}
    \end{array}\right.
\end{align*} 

\subsection{Reward Decomposition \& Strategy Masking}
\noindent In general, there may be multiple factors that contribute to aggregate rewards, to $Q_{\pi}(\cdot)$, and so to the decisions that agents make.  By modeling rewards and the long-term state-action values as scalars, traditional value-based approaches (e.g. $Q$-learning, DQN, etc.) obscure the contribution that each factor makes to agent value estimates and prevents users from adjusting those contributions to guide the agent to appropriate behavior.  To address this, we apply reward decomposition (\cite{juozapaitis2019}) and strategy masking; we define a vector-valued function where each dimension represents a single factor contributing to the overall immediate reward:
\begin{align*}
\vec{r}(s^{\prime},a,s) = \langle r_k(s^{\prime},a,s)\rangle_{k=1,\hdots, K}\in \mathbb{R}^K.
\end{align*}
Given this, it is natural to also define a vector-valued state-action value function:
\begin{align*}
\vec{Q}_{\pi}(s,a) = \langle Q_{\pi}^{(k)}(s,a)\rangle_{k=1,\hdots, K}\in \mathbb{R}^K,
\end{align*} 
where dimension $k$ reflects the contribution to the long-term state-action value from the factor measured by dimension $k$ of $\vec{r}$ so that:
\begin{align*}
Q_{\pi}^{(k)}(s,a) = E\left(\left.\sum_{t=0}^T\gamma^tr_k(s_{t+1},a_t,s_t)\right|s,a,\pi\right).
\end{align*}
The sum across these dimensions can be expressed as a dot product between the decomposed state-action value vector $\vec{{Q}}_{\pi}(\cdot)$ and a vector of coefficients whose role is to select dimensions of the decomposed state-action value function for inclusion into the sum and weight them as appropriate.  We call this coefficient vector the strategy mask and write it as:
\begin{align*}
\vec{m} = \langle m_k\rangle_{k=1,\hdots,K}.
\end{align*}
In standard reward decomposition, $\vec{{m}}$ would be a vector of ones.  Setting the $k$th entry of $\vec{m}$ to 0 would mean suppressing the contribution that the factor measured by $r_k(\cdot)$ makes.  Setting it to a negative value would amount to punishing the agent for taking an action possessed of that factor.

It is straightforward to adapt standard update rules to incorporate reward decomposition and strategy masking by working with the decomposed state-action values as appropriate.  Here we give three TD(0) examples: masked SARSA, masked expected SARSA, and masked $Q$-learning.  As a preliminary, given $\vec{Q}$ and $\vec{m}$ define a masked $\varepsilon$-greedy policy as:
\begin{align*}
\pi_{\vec{m}}(a|s) = \left\{\begin{array}{ll}
    1-\varepsilon + \frac{\varepsilon}{|A|}&\mbox{if }a = a^*_{\vec{m}}(s)\\
    \frac{\varepsilon}{|A|}&\mbox{otherwise}
    \end{array}\right.
\end{align*}
where $a^*_{\vec{m}}(s) = \argmax_{a\in A}\{\vec{Q}(s,a)\cdot\vec{m}\}$.
With this in mind we may specify SARSA with strategy masking as:
\begin{align*}
\vec{Q}(s,a)\gets(1 &- \alpha)\vec{Q}(s,a)+\alpha\left[\vec{{r}}(s^{\prime},a,s) + \gamma\vec{{Q}}\left(s',a^{\prime}\right)\right]\label{eq:Decomposed-DQN}\tag{masked SARSA}
\end{align*}
where $a\sim\pi_{\vec{m}}(s)$ and $a^{\prime}\sim\pi_{\vec{m}}(s^{\prime})$.  Similarly, a version of expected SARSA with strategy masking can be written as:
\begin{align*}
\vec{Q}(s,a)\gets(1 &- \alpha)\vec{Q}(s,a)\nonumber\\
&+\alpha\left[\vec{{r}}(s^{\prime},a,s) + \gamma\sum_{a^{\prime}}\pi_{\vec{m}}(a^{\prime}|s^{\prime})\vec{{Q}}\left(s',a^{\prime}\right)\right].\label{eq:Decomposed-DQN}\tag{masked expected SARSA}
\end{align*}
Finally, a natural candidate for $Q$-learning with strategy masking is:
\begin{align*}
\vec{Q}(s,a)\gets(1 - \alpha)&\vec{Q}(s,a)+\alpha\left[\vec{{r}}(s^{\prime},a,s) + \gamma\vec{{Q}}\left(s',a^*_{\vec{m}}(s^{\prime})\right)\right].\label{eq:Decomposed-DQN}\tag{masked $Q$-learning}
\end{align*}
Here, we build agent-understanding of the constraints on its expected future behavior into the estimation process for $\vec{Q}$ by basing the update on the next-period action associated with the state-action values included by the strategy mask. In SARSA we sample actions from the masked $\varepsilon$-greedy policy.  In expected SARSA we weight the next-period state-action values by the masked $\varepsilon$-greedy policy.  And in $Q$-learning we condition the update on a next-period action that maximizes the masked $Q$-values.

Incorporating the strategy mask $\vec{m}$ into temporal difference updating is an opportunity for the agent to learn to make good decisions while incorporating user-provided constraints on expected future behavior.  It is also an opportunity to adjust agent behavior after training by tuning the mask (note that when the training mask differs from the mask used during inference, this approach is distinct from simply imposing a reward decomposition structure with fewer reward dimensions).  


\subsection{What About Function Approximation?}
\noindent For very large state-spaces, it is common to use function approximation to estimate values for state-action pairs that the agent may never have actually encountered during training.  It is also straightforward to apply these ideas to value-based reinforcement learning approaches that incorporate function approximation.  We demonstrate this by extending the popular DQN algorithm (\cite{mnih2013}).  In DQN $Q_{\pi}(\cdot|\mathbf{w})$ is approximated as a deep learning model with parameters $\mathbf{w}$.  This deep learning model is trained by using a second deep learning model $Q_{\pi}(\cdot|\mathbf{w}^{\prime})$, referred to as the target approximator, to construct a target value that the main deep learning model can be adjusted towards.  The parameters of the target approximator, $\mathbf{w}^{\prime}$, are a lagged copy of $\mathbf{w}$ updated infrequently to maintain stability.  To incorporate reward decomposition and strategy masking into DQN we specify the target as:
\begin{align}
\vec{y} = \vec{r}(s^{\prime},a,s) + \left\{\begin{array}{ll}
   \gamma\vec{Q}(s',a^*_{\vec{m}}(s^{\prime})\mid \mathbf{w}')  & \spr\mbox{ not terminal} \\
  0   & \mbox{otherwise}
\end{array}\right.,\label{eq:DQN}
\end{align}
where $a^*_{\vec{m}}(s) = \argmax_{a\in A}\{\vec{Q}(s,a|\mathbf{w}^{\prime})\cdot\vec{m}\}$.  We include psuedocode for masked DQN below.

\begin{algorithm}[H]
\caption{Masked DQN}\label{algo:mdqn}
\begin{algorithmic}[1]
\Require{$\varepsilon,\gamma,\alpha\in(0,1)$, and $B,N,M,C\in\mathbb{N}$}
\State \textbf{Init:} $\mathbf{w}\in\mathbb{R}^d$, replay buffer $D$ with fixed memory $M$
\State Set $\mathbf{w}^{\prime}=\mathbf{w}$
\For{episode $1,2,3,\hdots,N$}
\State Sample $s\in S$
\For{$t = 1,2,3,\hdots$}
\State Sample $a$ according to $\pi_{\vec{m}}(s)$, observe $\vec{r}$ and $\spr$
\State Add $(s,a,\vec{r},\spr)$ to $D$, if $|D|>M$ drop the oldest
\If{$|D|\geq B$}
\State Sample $\left(s^{(j)},a^{(j)},\vec{r}^{(j)},s^{\prime(j)}\right)_{j=1}^B$ from $D$
\State Compute $\vec{y}^{(j)}$ according to equation \ref{eq:DQN}
\State Update: 
\begin{align*}
    \mathbf{w}\leftarrow \mathbf{w} - \alpha\nabla_{\mathbf{w}}\left(\frac{1}{B}\sum_{j=1}^B\sum_{k=1}^K\left[\vec{Q}^{(k)}\left(s^{(j)},a^{(j)}|\mathbf{w}\right)-\vec{y}^{(j)}_k\right]^2\right)
\end{align*}
\State Every $C$ steps $\mathbf{w}^{\prime}\gets \mathbf{w}$
\EndIf
\State Update $s\gets\spr$
\EndFor
\EndFor
\end{algorithmic}
\end{algorithm}

\subsection{Convergence of Masked $Q$-learning}
\noindent It is straightforward to show that reward decomposition and strategy masking can be combined with an appropriate update rule in a way that guarantees that estimates of the state-action values converge to their optimal values in the long run.  For example, for $Q$-learning, we show the following result:
\begin{theorem}Consider an MDP and suppose that $S$ and $A$ are finite, and that $\vec{r}$ and $\vec{m}$ are bounded. The quantity $\vec{Q}(s,a)\cdot\vec{m}$ converges to the optimal $Q$-value under a restricted masked $Q$-learning update rule of the form:
\begin{align*}
\vec{Q}(s,a)\cdot\vec{m} = (1-\alpha)&\vec{Q}(s,a)\cdot\vec{m}\\
&+ \alpha\left[\vec{r}\cdot\vec{m} + \gamma\vec{Q}(\spr,a^*_{\vec{m}}(\spr))\cdot\vec{m}\right]
\end{align*}
with probability one so long as:
\begin{align*}
\sum_t\alpha_t(s,a) = \infty\mbox{ and }\sum_{t}\alpha_t^2(s,a) < \infty
\end{align*}
for all $s\in S$ and $a\in A$.\label{thm:masked_Q}
\end{theorem}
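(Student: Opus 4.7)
The plan is to reduce the restricted masked $Q$-learning update to classical scalar tabular $Q$-learning on a transformed MDP and then invoke the standard convergence theorem. First, define the scalar quantities $\tilde{Q}(s,a) \coloneqq \vec{Q}(s,a)\cdot\vec{m}$ and $\tilde{r}(\spr,a,s) \coloneqq \vec{r}(\spr,a,s)\cdot\vec{m}$. Because the dot product is linear in its first argument, the restricted update on $\vec{Q}(s,a)\cdot\vec{m}$ stated in the theorem is equivalent to a scalar update on $\tilde{Q}(s,a)$.

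The key algebraic step is to observe that $a^*_{\vec{m}}(\spr) = \argmax_{\apr\in A}\{\vec{Q}(\spr,\apr)\cdot\vec{m}\} = \argmax_{\apr\in A}\tilde{Q}(\spr,\apr)$, so $\vec{Q}(\spr,a^*_{\vec{m}}(\spr))\cdot\vec{m} = \max_{\apr\in A}\tilde{Q}(\spr,\apr)$. Substituting reduces the update to
\begin{align*}
\tilde{Q}(s,a) \gets (1-\alpha)\tilde{Q}(s,a) + \alpha\bigl[\tilde{r}(\spr,a,s) + \gamma\max_{\apr\in A}\tilde{Q}(\spr,\apr)\bigr],
\end{align*}
which is precisely classical tabular $Q$-learning on the scalar MDP with the same states, actions, and transition kernel but scalar reward $\tilde{r}$. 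Boundedness of $\vec{r}$ and $\vec{m}$ makes $\tilde{r}$ bounded; $S$ and $A$ are finite by hypothesis; and the Robbins--Monro step-size conditions are in force for every $(s,a)$. I would then invoke the classical Watkins--Dayan convergence theorem for tabular $Q$-learning (equivalently, a stochastic-approximation/contraction argument in the sup norm using the Bellman optimality operator of the scalarized MDP) to conclude that $\tilde{Q}(s,a) = \vec{Q}(s,a)\cdot\vec{m}$ converges with probability one to the unique optimal scalar $Q$-function $\tilde{Q}^*$.

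The main obstacle is interpretive rather than technical: one must read ``the optimal $Q$-value'' in the statement as the optimal state-action value function of the scalarized MDP whose reward is $\vec{r}\cdot\vec{m}$, not the optimum of the original vector-reward problem under some other aggregation. One also implicitly needs, as is standard, that the condition $\sum_t \alpha_t(s,a) = \infty$ holding for every $(s,a)$ encodes infinite visitation of all state-action pairs by the learning policy. With these caveats fixed, there is no further technical work: the convergence claim is inherited wholesale from the classical tabular result, and the only genuinely new content is the linearity-plus-$\argmax$ reduction described above.
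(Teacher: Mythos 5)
Your proposal is correct, and its mathematical core is the same as the paper's: everything hinges on the observation that $a^*_{\vec{m}}(\spr)$ depends on $\vec{Q}$ only through the scalar $\vec{Q}\cdot\vec{m}$, so that $\vec{Q}(\spr,a^*_{\vec{m}}(\spr))\cdot\vec{m}=\max_{\apr}\vec{Q}(\spr,\apr)\cdot\vec{m}$ and the restricted update is closed in the scalar quantity $\tilde{Q}=\vec{Q}\cdot\vec{m}$. Where you differ is in packaging: you reduce to classical tabular $Q$-learning on the scalarized MDP with reward $\vec{r}\cdot\vec{m}$ and invoke the Watkins--Dayan result as a black box, whereas the paper re-derives that result in place --- it defines the masked Bellman optimality operator $H$, proves a contraction lemma in the sup-norm (using exactly your argmax-equals-max identity), and then explicitly verifies the four conditions of Theorem 1 of Jaakkola et al.\ (the contraction condition, the variance bound from boundedness of $\vec{r}$ and $\vec{m}$, and the Robbins--Monro step sizes). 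Your route is more economical and makes transparent that the theorem is classical $Q$-learning convergence in disguise; the paper's self-contained verification makes explicit which stochastic-approximation conditions are doing the work and identifies the limit as the fixed point of $H$, which is the same object as your $\tilde{Q}^*$. Your two caveats --- that ``the optimal $Q$-value'' must be read as the optimum of the scalarized problem, and that the step-size condition implicitly encodes infinite visitation of every state--action pair --- are both apt and are likewise implicit in the paper's treatment.
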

\noindent The argument for Theorem \ref{thm:masked_Q} follows a typical approach by showing that the update rule is a contraction mapping and then citing the appropriate theorem (in this case Theorem 1 from \cite{Jaakkola1993}).  The full proof is in the appendix.

Note that this result shows that the decomposed estimated $Q$-values aggregated according to the strategy mask will converge to the optimal $Q$-values.  It does not offer conclusions about dimensions with a mask weight of $0$ that we include to generate estimates to be potentially used during inference to adjust agent behavior.

\section{An Environment to Study Lying: Coup}
\noindent In the remainder of the paper we apply reward decomposition and strategy masking to study lying in AI agents.  To do so, we use the popular social deception game Coup as an environment to train and test our methods, which we describe here.

\subsection{Game Structure}
\noindent Coup is a multiplayer card game.  Each player has two resources available to them: cards, described in Table \ref{tab:Coup-Roles}, which determine the actions they can legally take; coins, which they may pay to take certain actions. At the start of the game, players are given 2 coins as well as 2 cards from the deck. Players do not know the cards of the other players. The goal of the game is to eliminate opponents by forcing them to discard all their cards and be the last player standing.

At each turn, a player is allowed to choose \textbf{any} action from Table \ref{tab:Coup-Roles}.  Following this, any player may attempt to block that action if it can be blocked.  Blocked actions are not carried out.  Whenever there is an attempted action or an attempted block, any player may choose to challenge the action or block, meaning that the player who initiated the action or block must reveal that they possess a card that allows them to perform the action or block legally.  If the challenged player does not have the required card, then they must discard; otherwise the challenger must discard (with the challenged player adding their revealed card to the deck and drawing a new card).  Once a player has discarded all their cards, they have lost.\footnote{Discarded cards are not added back to the deck but are instead shown for the remainder of the game.}

\begin{table}
\caption{The actions in Coup and the cards they correspond with.  Each card describes a role that delineates the actions a player may legally take.  There are 5 different roles and 3 copies of each.  Some actions are available to any role.\break}
\label{tab:Coup-Roles}
  \centering
  \begin{tabular}{lll}
    \toprule
    \textbf{Card/Role} & \textbf{Action} \\
    \midrule
    \textbf{Duke} & \textbf{Tax}: Take 3 coins from the center of the table.\\ & \textbf{Block:} Foreign Aid.& \tabularnewline
    \midrule
    \textbf{Ambassador} & \textbf{Exchange: }Take 2 cards from the deck and \\ & choose 2 to keep.\\ & \textbf{Block:} Steal. & \tabularnewline
    \midrule
    \textbf{Captain} & \textbf{Steal: }Take 2 coins from an opposing player.\\ & \textbf{Block:} Steal. & \tabularnewline
    \midrule 
    \textbf{Contessa} &\textbf{Block:} Assassinate. & \tabularnewline
    \midrule 
    \textbf{Assassin} & \textbf{Assassinate: }Pay 3 coins to attempt to make \\ & opposing player lose a card.\tabularnewline
    \midrule 
    \textbf{Any} & \textbf{Income: }Take one coin from the center of the table.\\ & Cannot be blocked.\tabularnewline
    \midrule 
    \textbf{Any} & \textbf{Foreign Aid: }Take two coins from the center\\ & of table (can be blocked \\ & by player claiming Duke).\tabularnewline
    \midrule 
    \textbf{Any} & \textbf{Coup: }Pay 7 coins to make opposing player lose\\ & a card. Cannot be blocked. & \tabularnewline
    \bottomrule
  \end{tabular}
\end{table}

\subsection{Information and Lying}
\noindent Coup's information environment facilitates studying whether our reward decomposition/strategy masking approach can be successfully applied to manage AI truthfulness.  In Coup, every player possesses private information and is not required to take legal actions that are consistent with that private information.  They are only punished if they are caught taking illegal actions (by a challenge).  Thus, in every turn, every player has multiple opportunities to misrepresent their private information to other players by explicitly claiming to have cards that they do not actually hold in their hand.  In other words, they can lie. 



\section{Learning and Suppressing Lying}
\noindent Given the structure of the game environment discussed above, we conceptualize Coup as a partially observable multi-agent MDP and detail our approach for training an agent to play it below.

\subsection{Partial Observability}
\label{section:arch}
\noindent We deal with the incomplete information available to players by training agents to play it using an adapted version of DQN (\cite{mnih2013}), namely the DRQN approach of \cite{hausknecht2015}.  Under this approach the function approximator that models $\vec{Q}(s,a|\mathbf{w})$ incorporates sequence modeling layers, in this case an LSTM.  The agent model takes as an input the player's personal information, i.e. the cards in their hand and the number of coins they have, known information about other players, and the history of actions taken by all players in the game. We expand on the deep learning architecture we use to model $\vec{Q}(s,a|\mathbf{w})$ in Appendix \ref{Appendix - Defining an Agent for Coup}.

\subsection{League play for Multi-Agent Capability}
\label{section:league-play}
\noindent To accommodate the multi-agent structure, we gathered data to train our agent using a simplified version of the league play developed in \cite{vinyals2019}, nicknamed StarLite. In our league play, the agents being trained, also referred to as champions, sample opponents from a league of previous players based on their win-rate against these agents, an approach \cite{vinyals2019} refer to as prioritized fictitious self-play (PFSP). Champions play a combination of league players sampled using PFSP as well as current champions being sampled at a fixed rate. They also face agents trained to exploit specific strategies. The goal of this process is to prevent our champion agents from overfitting to specific opponent policies during training.  We expand upon the StarLite implementation in Appendix \ref{Appendix - Creating Opponents to Train Against}.

\subsection{Applying Reward Decomposition \& Strategy Masking to Coup}
\noindent We settled on a four-dimensional reward decomposition aimed at balancing model complexity against capturing a minimal representation of how agents could manage the information environment in Coup\footnote{In our experience, the more granular the reward decomposition, the larger the neural network needed to be to make sure there was enough model capacity to generate outputs for all reward dimensions.  }.  These dimensions were winning, challenging, lying, and baiting:  

\begin{align}
\vec{r} = \langle r_k \rangle, \mbox{where } k \in\{\mbox{Win},\mbox{Challenge},\mbox{Lie},\mbox{Bait}\}.
\end{align}

\noindent Winning and lying are self-explanatory.  The challenging dimension is meant to capture the value of successfully detecting lies by opponents.  The baiting dimension is meant to capture the value of tricking opponents into unsuccessful challenges.  See Table \ref{tab:Reward-dimension-descriptions} for further description.  The ``Win" dimension is defined with a higher value compared to the other dimensions because it is rewarded sparsely yet remains the key goal for the agent in playing the game.  For our experiments, a reward of 10 was given for the ``Win" dimension and a reward of 1 was given for all other dimensions.

\begin{table}
\caption{Reward dimension descriptions.
\break}
\label{tab:Reward-dimension-descriptions}
  \centering
  \begin{tabular}{cl}
    \toprule 
    \textbf{Reward Dimension} & \textbf{Description}\tabularnewline
    \midrule
    \textbf{Win} & Reward given when agent is the last player\\ & remaining. No punishment for losing.\tabularnewline
    \midrule 
    \textbf{Challenge} & Reward given when agent successfully\\ & challenges, forcing a discard from an opponent.\\ & No punishment for  a failed challenge.\tabularnewline
    \midrule 
    \textbf{Lie} & Reward given when agent successfully completes\\ & an action/block while not having the card for the\\ & role. No punishment for failed lie.\tabularnewline
    \midrule 
    \textbf{Bait} & Reward given when agent successfully gets an\\ & opponent to challenge erroneously. No reward\\ & for unchallenged honest play.\tabularnewline
    \bottomrule
  \end{tabular}
\end{table}


From the \ref{eq:Decomposed-DQN} update rule, if we want to create an agent oriented towards certain behaviors, such as lying, or lie-detecting via challenging, we can define $\vec{{m}}$ with ones/zeros in dimensions based on how/whether we want the agent to learn to incorporate each dimension in its learned policy.\footnote{The training mask value that minimally compromises the optimality of the learned policy if a different inference mask value is used during inference is an open question.  We hypothesize that a training mask value of 0 is the best candidate.  This intuition comes from comparison to leaving a dimension unmasked (weight of 1) during training. Applying a negative value during inference would be a large shift from reward to punishment, leading to qualitatively different incentives between training and inference, whereas 0 reflects a more neutral stance on a dimension.}  We refer to dimensions where $m_k=1$ as unmasked.  There are eight different strategy masks available given our reward decomposition that prioritize winning, listed in the columns of Table \ref{tab:possible-strategy-masks}.

\begin{table}[H]
\caption{Potential strategy masks with incentives towards winning.
\break}
\label{tab:possible-strategy-masks}
  \centering
  \begin{tabular}{r|cccccccc}
    \mbox{Win}&1&1&1&1&1&1&1&1\\
    \mbox{Challenge}&0&1&0&0&1&1&0&1\\
    \mbox{Lie}&0&0&1&0&0&1&1&1\\
    \mbox{Bait}&0&0&0&1&1&0&1&1
  \end{tabular}
\end{table}

\section{Results}

\subsection{Training Agents to Lie and Lie Detect\label{liar-detector-masks-section}}

\begin{figure}
  \includegraphics[width=1.\linewidth]{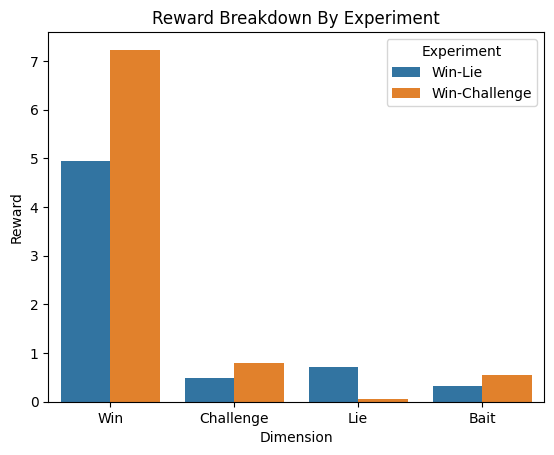}
  \caption{Breakdown of average reward per dimension over all state/action pairs for Win-Lie and Win-Challenge agents}
  \label{fig:reward-breakdown}
\end{figure}

\noindent To illustrate the efficacy of reward decomposition for creating and explaining different behaviors, we trained two agents with the StarLite league play structure. The first agent masked out all dimensions other than Win and Lie (the mask from column 3 of Table \ref{tab:possible-strategy-masks}, referred to as a Win-Lie agent); the other masked out all dimensions except Win and Challenge (the mask from column 2 of Table \ref{tab:possible-strategy-masks}, referred to as a Win-Challenge agent). Both agents were trained otherwise identically for one million episodes each. The intention was to create both a strong Win-Lie agent and a strong Win-Challenge agent. 

For each agent we then played 5000 games against their own league directly after training, holding model weights and PFSP values constant. We visualize the average reward values per dimension realized during play for either agent in Figure \ref{fig:reward-breakdown}.  For either agent their top two dimensions by reward were the dimensions we biased them towards, demonstrating that we were able to leverage reward decomposition and strategy masking to create agents that prioritized different behaviors in the same environment in a predictable, controllable way.

The distribution of the remaining dimensions reflect the leagues each agent was trained in. In the Win-Challenge league agents will aggressively challenge actions, increasing the value of the Bait dimension. As a consequence, lying is punished even though we do not manipulate the strategy mask to negatively weight lying. The Win-Lie agent learns that there is some value in baiting and challenging because the Win-Lie league has agents who will frequently attempt to lie on their turn, which means that challenges and baits will be rewarded. However, because these dimensions were masked during training, the agent did not learn to prioritize them over lying.

\begin{table}
  \caption{Win-Lie and Win-Challenge strategy mask win percentage and distribution
of rewards across reward dimensions for the three different sub-objectives
during the last 5000 episodes of training for a three-player game
of Coup. \break }
\label{tab:Liar-and-lie-comparison}
    \tabularnewline
  \centering
  \begin{tabular}{cccccc}
 \toprule 
 & & \multicolumn{4}{c}{\textbf{Collected Reward \%}}                   \\
 \cmidrule(r){3-6}
\textbf{Agent} & \textbf{Win\%} & Win & Lie & Challenge & Bait
\tabularnewline
\midrule 
Win-Lie & 30.56 \% & 78.22 \% & 7.41 \% & 9.09 \% & 5.28 \%\tabularnewline
\midrule
Win-Challenge & 42.58 \% & 83.72 \% & 9.10 \% & 0.79 \% & 6.39 \%\tabularnewline
\bottomrule
  \end{tabular}
\end{table}

\subsection{Altering Agent Behavior after Training}
\noindent Strategy masking also permits changing the mask after training and so offers an opportunity to minimize undesirable behavior that emerged during training. To demonstrate this, we trained an agent in the same league-based structure that was used to train the agents in Section \ref{liar-detector-masks-section}, only this time using the solely the win-dimension during training (the mask from column 1 of Table \ref{tab:possible-strategy-masks}).  This agent has built into its estimates of future value the expectation that its future actions will prioritize only winning, rather than specific behavioral patterns.

To analyze the efficacy of strategy masking to minimize lying post-training, we took a counterfactual approach.  Specifically, we recorded the trained agent’s actions over 5000 games against the training league, holding model weights and PFSP values constant. We then compare these historical, realized actions against the actions that would have been taken if lying was unmasked (Figure \ref{fig:action-distribution_0}) and if lying had been punished with a lie dimension weight of $-1$ (Figure \ref{fig:action-distribution_-1}), dividing actions up by type and whether a lie was involved (unmasked actions are on the left).\footnote{Using a greedy policy based on the learned $Q$-values.} In both cases, altering the mask after training substantially impacts agent tendency to lie during inference.

\begin{figure}
  \includegraphics[width=1.\linewidth]{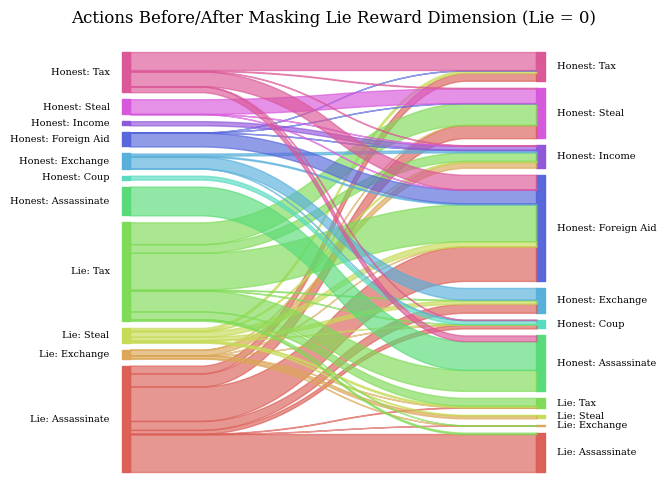}
  \caption{Comparison of distribution of actions that would have been taken with lie dimension unmasked (left) and lie dimension masked with a weight of 0 (right) across 5000 games.}
  \label{fig:action-distribution_0}

  \includegraphics[width=1.\linewidth]{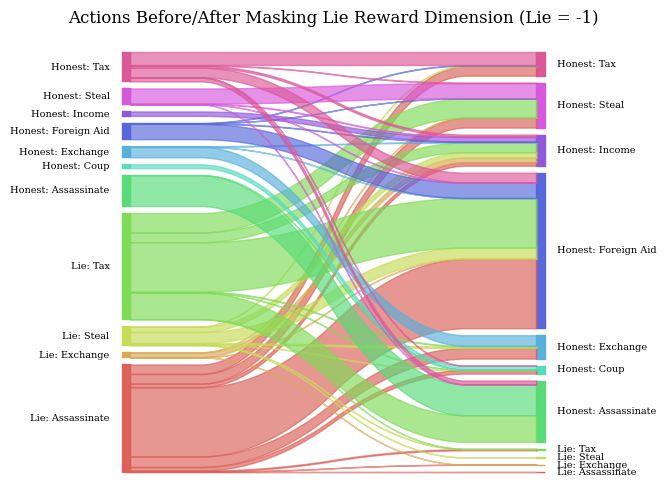}
  \caption{Comparison of distribution of actions that would have been taken with lie dimension unmasked (left) and lie dimension masked with a weight of -1 (right) across 5000 games.}
  \label{fig:action-distribution_-1}
\end{figure}

\subsection{Maintaining Agent Performance}
\noindent In our final set of experiments, we show that it is possible to alter agent behavior as above without negatively impacting agent performance.  The foregoing shows that strategy masking can be used to influence the agent post-training by eliminating problematic behavior.  However, because the strategy mask is applied post-training, the agent cannot adjust its policy in response.  Applying strategy masking in this way would be of limited value if it also had the effect of diminishing agent performance against its primary task.

To evaluate impact on agent performance, we took the agent trained with the mask from column 1 of Table \ref{tab:possible-strategy-masks} and varied the post-training mask weight of the lie dimension over $[-5, 5]$. For each lie dimension weight, we simulated 5000 games against the league and plot the resulting relationship between winning and lying for the agent with the blue and orange lines in Figure \ref{fig:win-lie-weights}.\footnote{Taking the priorities for PFSP that training had ended at as fixed for the purposes of this experiment.}

We see that for mask weights on the lying dimension greater than 0, lying increased.  Notably, when the lying dimension is unmasked (weight of 1) post-training, the agent lies up to 70\% of the time, with further increases as the weight goes up (orange line).  At the same time, there is a steady drop in win percent (blue line). On the other hand, when we disincentivized lying through the strategy mask (with weights less than 0), we see that lying actions decreased to near zero while the win rate was minimally impacted.

We further tested the robustness of agent performance by updating the opponent selection priorities.  Specifically, for each new mask, each agent played $19500$ games, selecting opponents uniformly so that each agent could develop its own set of “challenging agents” to play against when using PFSP.\footnote{The number of games played to update priorities is calculated as $\frac{wp}{n-1}$, where $w$ is the size of the window of previous games used in calculating the priority, $p$ is the number of players in the league, and $n$ is the number of players in each game. In this experiment, $w=1000$, $p=39$, and $n=3$.} Reevaluating this experiment with updated priorities for each lie dimension weight resulted in the green and red lines in Figure \ref{fig:win-lie-weights}. While these harder priorities did lead to an overall drop in win percentage similar effects obtain. 

What we take from these findings is that changing the strategy mask weight on the lying dimension introduces two qualitatively different constraints on agent behavior, depending on whether the weight is positive or negative. With negative weights, the agent is incentivized to win without lying. With positive weights the agent is incentivized to win by lying as much as possible.  We hypothesize that in our specific environment there are a limited number of situations where lying is advantageous, so increasing the weight on the strategy mask lying dimension pushes agents to lie in increasingly risky scenarios.  On the other hand, if we disincentivize lying, the agent now only has to change its actions for the instances that it already was considering to be advantageous for lying.  Thus, negative mask weights introduce minimal constraints on its choice-space while positive weights introduce substantially larger constraints and result in a less optimal policy post-training.

\begin{figure}
  \includegraphics[width=1.\linewidth]{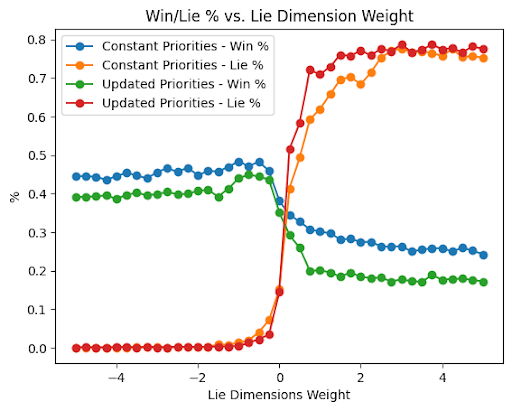}
  \caption{Across 5000 games, win percent and percentage of actions that were lies while varying the lie dimension in the strategy mask.}
  \label{fig:win-lie-weights}
\end{figure}

\section{Limitations and Future Work}
\noindent We believe that our results above show that we were able to use strategy masking to effectively control and mitigate specific agent behaviors.  There are some caveats to this that are worth mentioning and we do so now.  First, strategy masking as developed here is applicable to value-based reinforcement learning algorithms.  It is not clear how to extend strategy masking to policy-based algorithms and additional work would be needed to make this approach applicable there.  Second, our experiments in this paper focused on demonstrating the impact of strategy masking on agent behaviors under relatively fixed conditions.  Specifically, the behaviors we sought to modify were unambiguous and therefore easy to identify in our environment.  Beyond that we simply specified the reward values \textit{a priori}.  A natural next step for this work would be to test strategy masking to manage hallucination in large language models, an environment which requires estimating both the presence of hallucination and the reward structure.  More generally, we also think it would be valuable to test strategy masking in the context of inverse reinforcement learning, where the decomposed rewards could be learned.  Additionally, our experiments focused on masking out a behavior during training with a weight of 0 and then altering that during inference.  We believe that there are opportunities to experiment with the alignment between training and inference masks to measure the optimality of the resulting policy.  Finally, as this technique can be applied in any context involving a reward function, further work should be aimed towards applying this technique in different/more complex environments with larger capacity models to show that this approach can work effectively at scale in a variety of situations. For example, it would be useful to explore if reward decomposition and strategy masking can be used to track unfavorable model behaviors such as aggression in large language models trained via RLHF and systematically remove them.




\section{Conclusion}
\noindent In this work, we introduced strategy masking, a method to explicitly learn and then suppress undesirable AI agent behavior.  We demonstrated that the combination of reward decomposition and strategy masking can be used to train agents to employ specific behavioral strategies and then can be used to incentivize those same agents to change their behavior without additional training, and without compromising performance. The risk that AI agents learn bad behaviors unintentionally because of their reward structures serves as a warning about how artificial intelligence can go wrong if not properly monitored and controlled.  In our view, strategy masking could be particularly powerful for larger systems, as it can be applied without having to use extra compute to train the system further to avoid specific behaviors.  As artificial intelligence continues to be applied to more domains and problems, the risk that agents learn undesirable behaviors during training will grow.  Our hope is that this work can serve as a step towards general approaches to construct AI guardrails.


\bibliography{strategy_masking.bbl}
\bibliographystyle{icml2024}

\clearpage
\appendices
\section{Defining an Agent for Coup}
\label{Appendix - Defining an Agent for Coup}

\subsection{Agent Architecture}
\noindent Due to the partial observability of the state regarding the cards that other players have, we opted for a neural network architecture that would allow for sufficient comprehension of the actions taken in previous periods. As we wanted to encode this complex action history in the agent’s state, a function-approximator-based approach such as DQN (\cite{mnih2013}) became necessary to interpolate between states. We took inspiration for our network architecture from \cite{shi2022} and \cite{hausknecht2015} to address the partial observability of the state, which make use of Long Short Term Memory (LSTM) (\cite{hochreiter1997}) blocks to retain information from previous states.

The state available to an agent can be broken up into three types of information]: public static information, local information, and public dynamic (“historical”) information. Public static information is information about the current board that everyone can see at the current point of the game, including information such as the number of coins, lives, and revealed cards each player has. Local information is information for a given turn that only the agent knows, such as the hidden cards of the agent. Public dynamic information represents the sequence of actions taken by all players up until the current time step in the game. This historical information is considered dynamic because it has dynamic length.

Static information is concatenated into one vector and encoded via a simple MLP. The dynamic information is processed by a series of LSTM blocks and concatenated with the encoded static information. The resultant vector is passed through another MLP to produce an approximated Q-value $Q(s,a|\mathbf{w})$. This is summarized in Figure \ref{fig:d-coup-n-appendix}.

\begin{figure}
  \includegraphics[width=1.\linewidth]{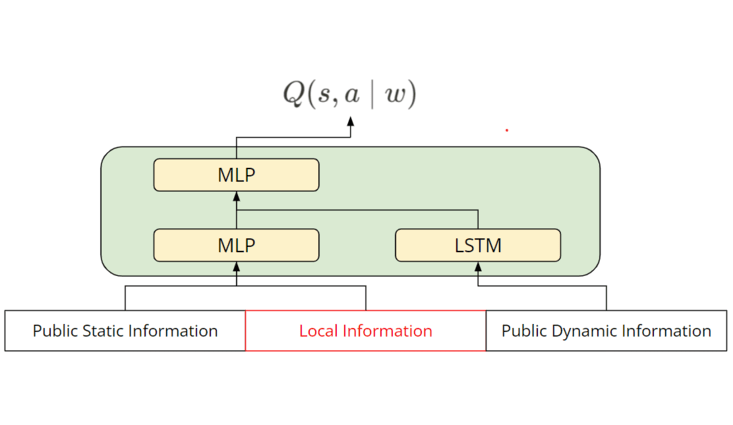}
  \caption{DQN function-approximator architecture used for training Coup agents.}
  \label{fig:d-coup-n-appendix}
\end{figure}

\subsection{Adapting for Reward Decomposition \& Strategy Masking}
\noindent To predict expected values for all actions and reward components $\vec{r}$ we altered the final layer to have $|A|\times K$ outputs. We can then calculate the values of each action as the sum of the reward components for each action in the model’s outputs. In the case of strategy masking, this would be a weighted sum using the strategy mask weights $\vec{m}$. Once these scalar values are calculated, they can be used to create the agent's policy as typically done for a DQN architecture (outputting state-action values as scalars) by using an $\varepsilon$-greedy policy.

\section{StarLite: Creating Opponents to Train Against}
\label{Appendix - Creating Opponents to Train Against}
\noindent To accommodate Coup's multi-agent structure and create capable opponents for our main agent to play against, we used a league play-based setup, inspired heavily by the work of \cite{vinyals2019}, which employed this technique for creating agents to play the complex multi-agent game of StarCraft.

In our simpler league play setup, nicknamed StarLite, we begin training agents from a random weight initialization to learn to play Coup. This is one key difference from \cite{vinyals2019}, as they had a large corpus of actions from professional human gameplay that was used to train their base agent via imitation learning. The main objective of league play is to create a set of potential agents that a learning agent can select opponents from to simulate games against and create training data to learn from.

In our league play, two main types of agents are added to the league over time to create our league: champions and main exploiters. Champions agents are checkpoints of the main learning agent that we are trying to train through league play. They play against copies of themselves and all agents in the league. To select opponents to play against during a single episode, a copy of the current learner is selected with probability $p$. Opponents are then selected from the league with probability $(1-p) \cdot f(w_{opp})$, where $w_{opp}$ represents how often the champion wins, given opponent $opp$ is in the episode, and $f(x)$ is a prioritization function that can be used to bias the learning agent to select opponents at a certain skill level. This approach is known as prioritized fictitious self-play (PFSP).

In our implementation of league play, we use $p = 0.3$ and prioritization function $f(x) = (1-x)^z$ with $z=6$ ($z$ is a parameter that prioritizes more selections of the most difficult opponents if increased). This function is used in the work from \cite{vinyals2019}. The input to $f(x)$ for $opponent_i$ is calculated as the $P(win|opponent_i) = \frac{P(win \cap opponent_i)}{P(opponent_i)}$ where $P(win \cap opponent_i)$ is calculated based on the last 1000 games involving $opponent_i$.

Main exploiters are trained from a random initialization of weights every time a new checkpoint of the champion is added to the league. These exploiters learn to explicitly counteract the strategies that the champions are developing. By adding these players to the league for future champions to learn against, they help to create a diverse set of strategies in the league, hopefully forcing future champions to create a strategy that will generalize to play well against many strategies.

Every 50 thousand episodes, a checkpoint of a champion was created and a main exploiter trained on the current state of the league for an additional 50 thousand episodes before being added to the league.

\begin{figure}
  \includegraphics[width=1.\linewidth]{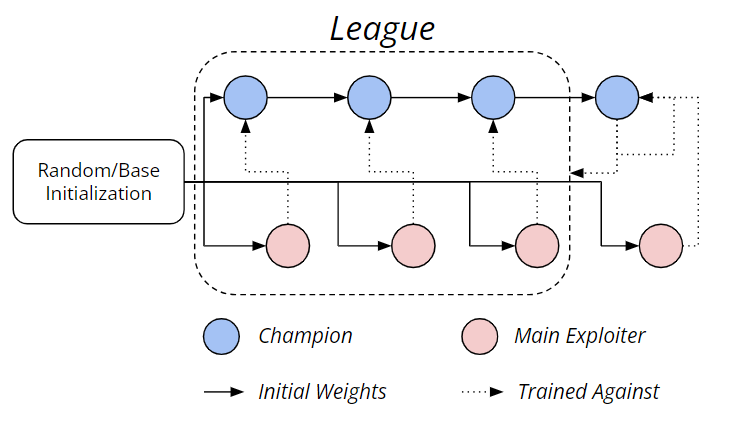}
  \caption{StarLite league play structure.}
  \label{fig:StarLite-league-play-appendix}
\end{figure}

We depict how this league play develops over time in Figure \ref{fig:StarLite-league-play-appendix}, where the left side of the figure represents $t = 0$ and progresses right during training. Circles represent checkpoints of agents and are considered part of the league if they fall within the dashed box. Solid lines represent periods of agent training, connecting the starting weights of the agent to the final weights of the agent that will be checkpointed. Dashed lines are used to describe the agents each agent trains against. A dashed line pointing to the dashed “League” box represents the use of PFSP during champion training.

\section{Asymptotic Properties of masked $Q$-learning}

\noindent Define the following operator:
\begin{align*}
H\vec{Q}(s,a)\cdot\vec{m} = \sum_{\spr\in S}p(\spr|s,a)\left[\vec{r}\cdot\vec{m} + \gamma\vec{Q}(\spr,a^*_{\vec{m}}(\spr))\cdot\vec{m}\right]
\end{align*}
where $a^*_{\vec{m}}(s) = \argmax_{a\in A}\left\{\vec{Q}(s,a)\cdot\vec{m}\right\}$.

\begin{lemma}$H\vec{Q}(s,a)\cdot\vec{m}$ is a contraction mapping in the sup-norm.\label{lemma:contraction}
\end{lemma}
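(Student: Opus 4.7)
The plan is to reduce the statement to the standard contraction proof for the Bellman optimality operator, with the scalar quantity $\vec{Q}(s,a)\cdot\vec{m}$ playing the role usually played by $Q(s,a)$. Concretely, for any two vector-valued state-action functions $\vec{Q}_1$ and $\vec{Q}_2$, I want to bound
\begin{align*}
\bigl\|H\vec{Q}_1\cdot\vec{m} - H\vec{Q}_2\cdot\vec{m}\bigr\|_\infty
\;\leq\; \gamma\,\bigl\|\vec{Q}_1\cdot\vec{m} - \vec{Q}_2\cdot\vec{m}\bigr\|_\infty,
\end{align*}
which is the definition of a $\gamma$-contraction in the sup-norm (finite $S$ and $A$ together with boundedness of $\vec{r}$ and $\vec{m}$ ensure that the relevant sup-norms are finite, so the space of bounded scalar functions on $S\times A$ under $\|\cdot\|_\infty$ is a genuine Banach space).

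First, I would fix a state-action pair $(s,a)$ and expand $H\vec{Q}_1(s,a)\cdot\vec{m} - H\vec{Q}_2(s,a)\cdot\vec{m}$ using the definition of $H$. The immediate reward term $\vec{r}\cdot\vec{m}$ is the same in both expressions and therefore cancels, leaving
\begin{align*}
H\vec{Q}_1(s,a)\cdot\vec{m} - H\vec{Q}_2(s,a)\cdot\vec{m}
= \gamma\sum_{\spr\in S} p(\spr|s,a)\Bigl[\vec{Q}_1(\spr,a^*_{1,\vec{m}}(\spr))\cdot\vec{m} - \vec{Q}_2(\spr,a^*_{2,\vec{m}}(\spr))\cdot\vec{m}\Bigr],
\end{align*}
where $a^*_{i,\vec{m}}(\spr)$ denotes the masked greedy action for $\vec{Q}_i$. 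Taking absolute values and pulling them inside the convex combination over $\spr$ via the triangle inequality, the task reduces to bounding, for each $\spr$, the difference of two maxima over $a\in A$.

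The key step is then the standard inequality
\begin{align*}
\Bigl|\max_{a\in A} f(a) - \max_{a\in A} g(a)\Bigr| \;\leq\; \max_{a\in A}|f(a) - g(a)|,
\end{align*}
applied with $f(a) = \vec{Q}_1(\spr,a)\cdot\vec{m}$ and $g(a) = \vec{Q}_2(\spr,a)\cdot\vec{m}$. This handles the fact that the argmaxes $a^*_{1,\vec{m}}$ and $a^*_{2,\vec{m}}$ need not coincide, and it upper bounds each per-state difference by $\|\vec{Q}_1\cdot\vec{m} - \vec{Q}_2\cdot\vec{m}\|_\infty$. Since the transition probabilities sum to one, the weighted sum over $\spr$ is bounded by the same quantity, yielding the factor of $\gamma$ overall. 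Taking the supremum over $(s,a)$ on the left then gives the contraction.

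The only real subtlety, which I consider the main obstacle, is making the max-of-maxima step fully rigorous when the greedy actions differ between $\vec{Q}_1$ and $\vec{Q}_2$; I would justify this by the symmetric two-line argument noting that $\vec{Q}_1(\spr,a^*_{1,\vec{m}})\cdot\vec{m} - \vec{Q}_2(\spr,a^*_{2,\vec{m}})\cdot\vec{m} \leq \vec{Q}_1(\spr,a^*_{1,\vec{m}})\cdot\vec{m} - \vec{Q}_2(\spr,a^*_{1,\vec{m}})\cdot\vec{m}$ by optimality of $a^*_{2,\vec{m}}$ for $\vec{Q}_2$, and the reverse direction by a symmetric argument. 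Everything else is routine algebra, and the result feeds directly into Theorem~1 of \cite{Jaakkola1993} in the proof of Theorem~\ref{thm:masked_Q}.
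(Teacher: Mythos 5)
Your proposal is correct and follows essentially the same route as the paper's proof: cancel the reward term, pull the absolute value inside the transition-probability average by the triangle inequality, and reduce the difference of masked greedy values to the standard bound $\left|\max_a f(a) - \max_a g(a)\right| \leq \max_a\left|f(a)-g(a)\right|$ before taking the sup over $(s,a)$. The symmetric two-line justification you give for that max-of-maxima step is exactly the content the paper relies on (stated there via $\vec{Q}^{(i)}(s,a^{*,i}_{\vec{m}}(s))\cdot\vec{m} = \max_a\{\vec{Q}^{(i)}(s,a)\cdot\vec{m}\}$), so nothing further is needed.
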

\begin{proof} Take any $\vec{m}$, and any $\vec{Q}^{(1)}(s,a)$ and $\vec{Q}^{(2)}(s,a)$.  We want to show that:
\begin{align*}
||HQ^{(1)}(s,a)\cdot\vec{m}& - HQ^{(2)}(s,a)\cdot\vec{m}||_{\infty}\\
&\hspace{5mm}\leq\gamma||Q^{(1)}(s,a)\cdot\vec{m} - Q^{(2)}(s,a)\cdot\vec{m}||_{\infty}.
\end{align*}
To see this, consider:
\begin{align*}
||&HQ^{(1)}(s,a)\cdot\vec{m} - HQ^{(2)}(s,a)\cdot\vec{m}||_{\infty}\\
&=\max_{s,a}\left|\sum_{\spr\in S}p(\spr|s,a)\left[\vec{r}\cdot\vec{m} + \gamma\vec{Q}^{(1)}(\spr,a^{*,1}_{\vec{m}}(\spr))\cdot\vec{m}\right]\right.\\
&\hspace{15mm}\left.-\sum_{\spr\in S}p(\spr|s,a)\left[\vec{r}\cdot\vec{m} + \gamma\vec{Q}^{(1)}(\spr,a^{*,2}_{\vec{m}}(\spr)))\cdot\vec{m}\right]\right|\\
&=\max_{s,a}\left|\sum_{\spr\in S}p(\spr|s,a)\gamma\left[\vec{Q}^{(1)}(\spr,a^{*,1}_{\vec{m}}(\spr))\cdot\vec{m}\right.\right.\\
&\hspace{50mm}\left.\left. - \vec{Q}^{(2)}(\spr,a^{*,2}_{\vec{m}}(\spr))\cdot\vec{m}\right]\right|.
\end{align*}
where $a^{*,i}_{\vec{m}}(s) \in \argmax_a\left\{\vec{Q}^{(i)}(s,a)\cdot\vec{m}\right\}$. By the triangle inequality:
\begin{align*}
&\leq \max_{s,a}\sum_{\spr\in S}p(\spr|s,a)\gamma\left|\vec{Q}^{(1)}(\spr,a^{*,1}_{\vec{m}}(\spr))\cdot\vec{m}\right.\\
&\hspace{50mm}\left. - \vec{Q}^{(2)}(\spr,a^{*,2}_{\vec{m}}(\spr))\cdot\vec{m}\right|
\end{align*}
Note that $\vec{Q}^{(i)}(s,a^{*,i}_{\vec{m}}(s) = \max_a\left\{\vec{Q}^{(i)}(s,a)\cdot\vec{m}\right\}$.  Thus we have:
\begin{align*}
&=\max_{s,a}\sum_{\spr\in S}p(\spr|s,a)\gamma\left|\max_a\vec{Q}^{(1)}(\spr,a)\cdot\vec{m} - \max_a\vec{Q}^{(2)}(\spr,a)\cdot\vec{m}\right|\\
&\leq\max_{s,a}\sum_{\spr\in S}p(\spr|s,a)\gamma\max_a\left|\vec{Q}^{(1)}(\spr,a)\cdot\vec{m} - \vec{Q}^{(2)}(\spr,a)\cdot\vec{m}\right|\\
&\leq\max_{s,a}\sum_{\spr\in S}p(\spr|s,a)\gamma\max_{s,a}\left|\vec{Q}^{(1)}(s,a)\cdot\vec{m} - \vec{Q}^{(2)}(s,a)\cdot\vec{m}\right|\\
&=\max_{s,a}\sum_{\spr\in S}p(\spr|s,a)\gamma||\vec{Q}^{(1)}(s,a)\cdot\vec{m} - \vec{Q}^{(2)}(s,a)\cdot\vec{m}||_{\infty}\\
&=\gamma||\vec{Q}^{(1)}(s,a)\cdot\vec{m} - \vec{Q}^{(2)}(s,a)\cdot\vec{m}||_{\infty}
\end{align*}
as required.
\end{proof}

\noindent\textbf{Proof of Theorem \ref{thm:masked_Q}}.
\begin{proof}Denote the optimal $Q$-value as $Q^*_{\vec{m}}(s,a)$ and write:
\begin{align*}
\vec{Q}^{(t+1)}(s,a)\cdot\vec{m} &= \left[(1-\alpha_t(s,a))\vec{Q}^{(t)}(s,a)\right.\\
    &\left.\hspace{10mm}+ \alpha_t(s,a)\left[\vec{r} + \gamma\vec{Q}^{(t)}(\spr,a^*_{\vec{m}}(\spr))\right]\right]\cdot\vec{m}\\
%
%
&= (1-\alpha_t(s,a))\vec{Q}^{(t)}(s,a)\cdot\vec{m}\\
    &\hspace{10mm}+ \alpha_t(s,a)\left[\vec{r}\cdot\vec{m} + \gamma\vec{Q}^{(t)}(\spr,a^*_{\vec{m}}(\spr))\cdot\vec{m}\right]
\end{align*}
Now, define $\Delta_t(s,a) = \vec{Q}^{(t)}(s,a)\cdot\vec{m} - Q^*_{\vec{m}}(s,a)$.  Note further that:
\begin{align*}
\vec{Q}^{(t+1)}&(s,a)\cdot\vec{m} - Q^*_{\vec{m}}(s,a)\\
&= (1-\alpha_t(s,a))\vec{Q}^{(t)}(s,a)\cdot\vec{m} - Q^*_{\vec{m}}(s,a)\\
    &\hspace{5mm}+ \alpha_t(s,a)\left[\vec{r}\cdot\vec{m} + \gamma\vec{Q}^{(t)}(\spr,a^*_{\vec{m}}(\spr))\cdot\vec{m}\right]\\
\Leftrightarrow\Delta_{t+1}(s,a) &= (1-\alpha_t(s,a))\vec{Q}^{(t)}(s,a)\cdot\vec{m} - Q^*_{\vec{m}}(s,a)\\
    &\hspace{5mm}+ \alpha_t(s,a)\left[\vec{r}\cdot\vec{m} + \gamma\vec{Q}^{(t)}(\spr,a^*_{\vec{m}}(\spr))\cdot\vec{m}\right]\\
\Leftrightarrow\Delta_{t+1}(s,a) &= (1-\alpha_t(s,a))\vec{Q}^{(t)}(s,a)\cdot\vec{m} - Q^*_{\vec{m}}(s,a) (s,a)\\
    &\hspace{5mm} + \alpha_t(s,a)\left[\vec{r}\cdot\vec{m} + \gamma\vec{Q}^{(t)}(\spr,a^*_{\vec{m}}(\spr))\cdot\vec{m}\right]\\
    &\hspace{5mm} + \alpha_t(s,a)Q^*_{\vec{m}} - \alpha_t(s,a)Q^*_{\vec{m}}(s,a)\\
\Leftrightarrow\Delta_{t+1}(s,a) &= (1-\alpha_t(s,a))\Delta_t(s,a)\\
    &\hspace{5mm} + \alpha_t(s,a)\left[\vec{r}\cdot\vec{m} + \gamma\vec{Q}^{(t)}(\spr,a^*_{\vec{m}}(\spr))\cdot\vec{m} - Q^*_{\vec{m}}(s,a)\right].
\end{align*}
Next, write:
\begin{align*}
F_t(s,a) = \vec{r}\cdot\vec{m} + \gamma\vec{Q}^{(t)}(\spr,a^*_{\vec{m}}(\spr))\cdot\vec{m} - Q^*_{\vec{m}}(s,a)
\end{align*}
and note that $\vec{r} = \vec{r}(\spr,a,s)$.  Thus, given that $\spr\sim p(s,a)$ we have:
\begin{align*}
E(F_t&(s,a))\\
&= \sum_{\spr\in S}p(\spr|s,a)\left[\vec{r}\cdot\vec{m} + \gamma\vec{Q}^{(t)}(\spr,a^*_{\vec{m}}(\spr))\cdot\vec{m} - Q^*_{\vec{m}}(s,a)\right]\\
&= \sum_{\spr\in S}p(\spr|s,a)\left[\vec{r}\cdot\vec{m} + \gamma\vec{Q}^{(t)}(\spr,a^*_{\vec{m}}(\spr))\cdot\vec{m}\right] - Q^*_{\vec{m}}(s,a).
\end{align*}
Bythe fact that $HQ^*_{\vec{m}}(s,a) = Q^*_{\vec{m}}(s,a)$ we may write:
\begin{align*}
HQ^{(t)}(s,a)\cdot\vec{m} - Q^*_{\vec{m}}(s,a) = HQ^{(t)}(s,a)\cdot\vec{m} - HQ^*_{\vec{m}}(s,a)
\end{align*}
By Lemma \ref{lemma:contraction} we have:
\begin{align*}
||HQ^{(t)}(s,a)\cdot\vec{m}& - HQ^*_{\vec{m}}(s,a)||_{\infty}\\
&\leq\gamma||Q^{(t)}(s,a)\cdot\vec{m} - Q^*_{\vec{m}}(s,a)||_{\infty}.
\end{align*}
This verifies condition 3 of Theorem 1 in \cite{Jaakkola1993}.  Next consider:
\begin{align*}
Var(F_t(s,a) = E(F_t(s,a) - E(F_t(s,a)))^2.
\end{align*}
By the foregoing we established that:
\begin{align*}
E(F_t(s,a)) &= \sum_{\spr\in S}p(\spr|s,a)\left[\vec{r}\cdot\vec{m} + \gamma\vec{Q}^{(t)}(\spr,a^*_{\vec{m}}(\spr))\cdot\vec{m}\right] - Q^*_{\vec{m}}(s,a)\\
& = H\vec{Q}^{(t)}(s,a)\cdot\vec{m} - Q^*_{\vec{m}}(s,a).
\end{align*}
Thus we may write:
\begin{align*}
Var(F_t(s,a) &= E\left(\vec{r}\cdot\vec{m} + \gamma\vec{Q}^{(t)}(\spr,a^*_{\vec{m}}(\spr))\cdot\vec{m} - Q^*_{\vec{m}}(s,a)\right.\\
&\hspace{10mm}\left. - (H\vec{Q}^{(t)}(s,a)\cdot\vec{m} - Q^*_{\vec{m}}(s,a))\right)^2\\
&= E\left(\vec{r}\cdot\vec{m} + \gamma\vec{Q}^{(t)}(\spr,a^*_{\vec{m}}(\spr))\cdot\vec{m}  - H\vec{Q}^{(t)}(s,a)\cdot\vec{m}\right)^2\\
&= Var\left(\vec{r}\cdot\vec{m} + \gamma\vec{Q}^{(t)}(\spr,a^*_{\vec{m}}(\spr))\cdot\vec{m}\right).
\end{align*}
Since $\vec{r}$ and $\vec{m}$ are assumed bounded it follows that there exists $C\in\mathbb{R}$ such that:
\begin{align*}
Var(F_t(s,a))\leq C(1 + ||\Delta_t(s,a)||^2_{\infty}).
\end{align*}
This satisfies condition 4 of of Theorem 1 in \cite{Jaakkola1993}.  Finally, note that conditions 1 and 2 of Theorem 1 in \cite{Jaakkola1993} are satisfied by assumption.  Thus, by of Theorem 1 in \cite{Jaakkola1993} we have that
\begin{align*}
\Delta_t(s,a) = \vec{Q}^{(t)}(s,a)\cdot\vec{m} - Q^*_{\vec{m}}(s,a)\to 0
\end{align*}
with probability 1.  That is, $\vec{Q}^{(t)}(s,a)\cdot\vec{m}\to Q^*_{\vec{m}}(s,a)$ with probability 1, as required.

\end{proof}

\end{document}